\newcommand{\IN}{\mathbb{N}}
\newcommand{\IR}{\mathbb{R}}
\renewcommand{\AA}{\mathcal{A}}
\newcommand{\lbsymb}{\dlsh}
\newcommand{\A}{\Sigma}
\newcommand{\AStar}{\A^*}
\newcommand{\f}[1]{\textbf{#1}}
\newcommand{\fh}{\textbf{h}}
\newcommand{\fg}{\textbf{g}}
\newcommand{\fp}{\textbf{p}}
\newcommand{\fq}{\textbf{q}}
\newcommand{\HH}{\mathcal{H}}
\newcommand{\GG}{\mathcal{G}}
\newcommand{\NN}{\mathcal{N}}
\newcommand{\IP}{\mathbb{P}}
\newcommand{\IH}{\textbf{H}}
\newcommand{\IG}{\textbf{G}}
\newcommand{\FP}{\textbf{B}}
\newcommand{\D}[2]{\Delta_{#1,#2}}
\renewcommand{\d}[2]{\delta_{#1,#2}}
\newcommand{\brac}[1]{{\left(#1\right)}}
\newcommand{\bracabs}[1]{{\left|#1\right|}}
\newcommand{\bracround}[1]{{\left\{#1\right\}}}
\newcommand{\braceckig}[1]{{\left[#1\right]}}
\newcommand{\visiblespace}{\text{\textvisiblespace}}
\newcommand{\AExt}{\overline{\A}}
\newcommand{\AStarExt}{\AExt^*}
\newcommand{\fl}{\textbf{b}}
\newcommand{\bfl}{\overline{\fl}}
\newtheorem{theorem}{Theorem}
\newcommand{\VAA}{A}
\newcommand{\RAA}{{\AA^R}}
\newcommand{\GAA}{{\AA^G}}
\newcommand{\blind}{\boolean{false}}
\DeclareMathOperator*{\argmin}{arg\,min}
\newcommand*{\red}[1]{\textcolor{red}{#1}}	
\begin{document}
%
\title{End-To-End Measure for Text Recognition}

\author{
\ifthenelse{\blind}{\vspace{1.5cm}}{
	\IEEEauthorblockN{Gundram Leifert, Roger Labahn}
	\IEEEauthorblockA{Computational Intelligence Technology Lab\\
		University of Rostock\\
		18057 Rostock, Germany\\
		\{gundram.leifert,roger.labahn\}@uni-rostock.de
	}
	\and
	\IEEEauthorblockN{Tobias Grüning, Svenja Leifert}
	\IEEEauthorblockA{PLANET artificial intelligence GmbH\\
		Warnowufer 60\\
		18057 Rostock, Germany\\
		\{tobias.gruening,svenja.leifert\}@planet.de
	}
}
}
\maketitle

\begin{abstract}
Measuring the performance of text recognition and text line detection engines is an important step to objectively compare systems and their configuration. There exist well-established measures for both tasks separately. However, there is no sophisticated evaluation scheme to measure the quality of a combined text line detection and text recognition system. The F-measure on word level is a well-known methodology, which is sometimes used in this context. Nevertheless, it does not take into account the alignment of hypothesis and ground truth text and can lead to deceptive results. 
Since users of automatic information retrieval pipelines in the context of text recognition are mainly interested in the end-to-end performance of a given system, there is a strong need for such a measure. 
Hence, we present a measure to evaluate the quality of an end-to-end text recognition system. The basis for this measure is the well established and widely used character error rate, which is limited -- in its original form -- to aligned hypothesis and ground truth texts. The proposed measure is flexible in a way that it can be configured to penalize different reading orders between the hypothesis and ground truth and can take into account the geometric position of the text lines. Additionally, it can ignore over- and under- segmentation of text lines. With these parameters it is possible to get a measure fitting best to its own needs.
\end{abstract}
\begin{IEEEkeywords}
  measure, end-to-end, character error rate, word error rate, F-measure, bag-of-word, HTR 
\end{IEEEkeywords}
\IEEEpeerreviewmaketitle
\section{Introduction}
\label{S:intro}
Finding and reading textual information in an image is a common task in many real-world scenarios.
One application is the transcription of historical documents.
Typically, the focus is to transcribe the written text in the semantically correct order, whereas the geometric position of text lines is not in the scope of interest.
Another use case is to make a collection searchable, i.e., to allow for keyword spotting.
In such a scenario, a system is used to create some kind of index for the whole collection.
So the main focus is to find textual information in the image, whereas reading order of the text lines and sometimes even the text position is not of importance.
In contrast, there are other applications for which the geometric information of text lines is necessary, e.g. the postal inbox processing for insurances and banks.
Their purpose is to automatically read and classify all incoming letters.
Often, the input image should be enriched with a layer of textual information.
Therefore, geometric positions and the reading order of text lines are important to place the transcribed text at the right position.
Having these use cases with entirely different key aspects, there is the demand for a configurable end-to-end evaluation which is adaptable to the specific needs.
	
In the context of information retrieval the bag-of-word (BOW) measure is widely used \cite{bagofword2009}. It can be efficiently calculated by splitting the text into words and measuring precision, recall and F-measure of the text.
The BOW suffers from three major drawbacks. First, there is no unique definition of how a "word" should look like. This results in inconsistent and incomparable values of the BOW measure for different tokenizations of text lines into words. Second, a wrong character produces an error for the entire word. Comparably, segmentation errors are also penalized quite strongly. An erroneously recognized space character results in two word errors. Third, the BOW is not aware of any (potentially important) reading order and consequently does not penalize any permutation of recognized words.
	
For the decoupled problems of layout analysis (LA) and handwritten text recognition (HTR) there are well established measures. For the LA, which extracts text lines on pixel level, there are evaluation schemes based on different entities. For instance, based on pixel information \cite{marti2001using}, baselines \cite{DBLP:journals/corr/GruningLDKF17} or origin points \cite{Murdock2015}. Each of these schemes has its application area and consequently its right to exist.
On the other hand, the standard to evaluate the quality of an HTR system is the character error rate (CER), which has been used for decades. 
A major drawback of the CER is that it requires two aligned sequences of characters which usually are the transcriptions of text lines. This paper will provide task-dependent solutions for this alignment and an implementation is freely available supporting the well established PageXML format~\cite{pageXML}.

The paper is structured as follows:
Sec.~\ref{S:Measure} will derive the end-to-end CER from the classical CER and will motivate and define different configurations of this measure.
We will briefly demonstrate how to get from CER to word error rate (WER) and finally to BOW.
In Sec.~\ref{S:Implementation} the calculation of the introduced measures is described and the exactness of the proposed algorithms is proven for certain conditions.
A short summary and outlook concludes the paper in Sec.~\ref{S:Conclusion}.

\section{Measure Formulation}
\label{S:Measure}
The CER is based on the Levenshtein distance (LD), which counts the character manipulations (insertion, deletion, substitution) to map one string to another \cite{levenshtein1966binary}.
Let $\A$ be the alphabet of all characters and $\AStar$ the Kleene star of $\A$. 
Let $\fg_i\in\A$ be the i-th character of $\fg\in\AStar$ and $\fg_{i:j}:=\brac{\fg_i,\fg_{i+1},\ldots,\fg_j}$ a subsequence of $\fg$.
In the following it is required that the hypothesis (HYP) and ground truth (GT) $\fh,\fg\in\AStar$ do not have leading or trailing spaces\footnote{$\fh,\fg\in\AStar$ can be seen as sequence or tuple of characters, or as string}.
The LD between $\fh$ and $\fg$ is defined by recursion.
Let 
\begin{align}
\label{E:d_classic}
\d{i}{j}=\begin{cases}
0&\text{ if } \fh_i=\fg_j\\
1& \text{else}
\end{cases}
\end{align}
be the function that indicates the difference between $\fh_i$ and $\fg_j$. Let $\D{i}{j}=LD\brac{\fh_{1:i},\fg_{1:j}}$ be the number of manipulations which have to be done on $\fh_{1:i}$ to map it to $\fg_{1:j}$. This function is defined recursively over $i$ and $j$ with $\braceckig{n}:=\left\{1,...,n\right\}$ as follows 
\begin{align}
\D{0}{0}&=0\nonumber\;,\;\;
\D{i}{0}=i \quad \forall i\in\braceckig{\bracabs{\fh}}\nonumber\;,\;\;
\D{0}{j}=j \quad \forall j\in\braceckig{\bracabs{\fg}}\nonumber\,\\
\label{E:D_classic}
\D{i}{j}&=\min\bracround{\begin{array}{l}
\D{i-1}{j-1}+\d{i}{j}\\
\D{i-1}{j}+1\\
\D{i}{j-1}+1
\end{array}} \forall i\in\braceckig{\bracabs{\fh}},j\in\braceckig{\bracabs{\fg}},
\end{align}
so that we obtain the LD of the strings $\fh$ and $\fg$ by
\begin{align*}
LD\brac{\fh,\fg}:=LD\brac{\fh_{1:\bracabs{\fh}},\fg_{1:\bracabs{\fg}}}=\D{\bracabs{\fh}}{\bracabs{\fg}}.
\end{align*}
Since $\D{i}{j}$ in \eqref{E:D_classic} is recursively defined using values with one step back in $i$ or/and $j$, this problem can be efficiently solved using dynamic programming over the two-dimensional \mbox{$i$-$j$-space}.
Finally, the character error rate $CER:\AStar\times\AStar\to\IR^+$ is defined by
\begin{align*}
CER\brac{\fh,\fg}:=\frac{LD\brac{\fh,\fg}}{\bracabs{\fg}}.
\end{align*}
Of note, the CER could exceed $1$ and it is not commutative, i.e., $CER\brac{\fg,\fh}\neq CER\brac{\fh,\fg}$ for certain inputs $\fg,\fh$.

To evaluate a system's performance, the CER is calculated for a certain amount of text lines -- the so-called test set -- to get a reliable statistic. The test set is a $K$-tuple of GT sequences $\GG:=\brac{\GG_1,\ldots,\GG_K},\GG_k\in\AStar$. The HYP $\HH:=\brac{\HH_1,\ldots,\HH_K}$ is calculated by the system which has to be evaluated. The CER for a given test set is defined by
\begin{align*}
LD\brac{\HH,\GG}&:=\sum_{k=1}^K LD\brac{\HH_k,\GG_k}\\
\bracabs{\GG}&:=\sum_{k=1}^K\bracabs{\GG_k}\\
CER\brac{\HH,\GG}&:=\frac{LD\brac{\HH,\GG}}{\bracabs{\GG}}.
\end{align*}

To measure an end-to-end system, the CER calculation has to be extended from comparing two text lines to an arbitrary number of text lines of a page. For our proposed evaluation we expand the GT and HYP definition: Instead of a sequence of characters, we have a tuple of sequences of characters. For one fixed $k\in\braceckig{K}$ the $\HH_k,\GG_k\in\AStar$ become $\HH_k,\GG_k\in\brac{\AStar}^*$.
To calculate the CER the expansion of the denominator can be done straight-forward by
\begin{align*}
\bracabs{\GG}=\sum_{k=1}^K \bracabs{\GG_k}=\sum_{k=1}^K \sum_{x=1}^{\bracabs{\GG_k}}\bracabs{\brac{\GG_k}_x},
\end{align*}
whereas the expansion for the numerator
\begin{align*}
LD\brac{\HH,\GG}=\sum_{k=1}^K LD\brac{\HH_k,\GG_k}
\end{align*}
is non-trivial, because it is not clear how to calculate $LD\brac{\HH_k,\GG_k}$ easily.
Different ways to calculate
\begin{align*}
LD\brac{\IH,\IG}:=LD\brac{\HH_k,\GG_k}.
\end{align*}
will be proposed and discussed in the following.
$\IH,\IG\in\brac{\AStar}^*$ are tuples of character sequences, but $\bracabs{\IH}\neq\bracabs{\IG}$ has to be considered, which means that the numbers of text lines differ (mainly resulting from an erroneously working LA). 
The key idea is to expand \eqref{E:d_classic} and \eqref{E:D_classic} to match two tuples of character sequences.
Let $\IH:=\brac{\IH_1,\ldots,\IH_N}$ be the HYP lines and $\IG:=\brac{\IG_1,\ldots,\IG_M}$ the GT lines.
For the reason of simplicity, we write $\IH_y\in\IH$ if a text line belongs to the tuple of text lines, and $\IH'\subset\IH\Leftrightarrow\forall \IH_y'\in\IH':\IH_y'\in\IH$.
The \emph{Assignment Matrix} $A\in\AA$ defines which HYP and GT lines are assigned to each other.
We define the set of valid assignment matrices as 
\begin{align}
\label{E:A_vanilla}
\AA:=\bracround{\VAA\in\bracround{0,1}^{N\times M}\mid \left\|\VAA\right\|_1\leq1\land \left\|\VAA\right\|_\infty\leq1},
\end{align}
whereas $A_{y,x}=1$ means that $\IH_y$ and $\IG_x$ are assigned to each other.
The conditions in \eqref{E:A_vanilla} ensure that each GT line is assigned to at most one HYP line and vice versa.
With $A\in\AA$ it is possible to define the three sets
\begin{align*}
W&:=W\brac{A}=\bracround{\brac{y,x}\in\braceckig{N}\times\braceckig{{M}}\ \vert\ A_{y,x}=1},\\
U&:=U\brac{A}=\bracround{y\in\braceckig{N}\ \vert\ \forall x\in\braceckig{M}:A_{y,x}=0},\\
V&:=V\brac{A}=\bracround{x\in\braceckig{M}\ \vert\ \forall y\in\braceckig{N}:A_{y,x}=0},
\end{align*}
with $W$ containing the indices of the assigned text lines of $\IH$ and $\IG$ whereas $U$ and $V$ contain the indices of the unmatched text lines.
Note that all indices are in one of these sets, consequently $2\bracabs{W}+\bracabs{U}+\bracabs{V}=N+M$ holds.
The minimal LD is then defined by
\begin{align}
\label{E:LD-vanilla}
&LD\brac{\IH,\IG}:=LD_\AA\brac{\IH,\IG}=\\
&\min\limits_{A\in\AA}\sum_{\brac{y,x}\in W\brac{A}}LD\brac{\IH_y,\IG_x}+\sum_{y\in U\brac{A}} \bracabs{\IH_y}+\sum_{x\in V\brac{A}} \bracabs{\IG_x}.\nonumber
\end{align}
and the CER is defined by
\begin{align*}
CER\brac{\IH,\IG}=\frac{LD\brac{\IH,\IG}}{\bracabs{\IG}}.
\end{align*}
Of note, the LD in the sum of \eqref{E:LD-vanilla} is the basic LD which operates on single text lines.
If $CER\brac{\IH,\IG}=0$ holds, it is obvious that $\bracabs{\IG}=N=M=\bracabs{\IH}$, $A$ is a permutation matrix and $\forall A_{y,x}=1:\IH_y=\IG_x$. This also results in empty sets $U$ and $V$.

Next, we will describe different ways to modify this error rate. Whereas Sections \ref{S:measure:ro} and \ref{S:measure:geo} add restrictions for the LD calculation, Section \ref{S:measure:seg} allows a modification of $\IH$ to better match $\IG$. In Section \ref{S:measure:comb} we will discuss the combination of these modifications. Finally, a comparison between CER, WER and BOW is given in Section \ref{S:measure:cer-wer-bow}.
\subsection{Penalizing Reading Order Errors}
\label{S:measure:ro}
Even if the reading order of pages with tables, notes, marginalia or multiple columns is hard to define, it is crucial for semantic understanding.
So it is reasonable to extend the restriction of \eqref{E:A_vanilla} to
\begin{align*}
\RAA:=\big\{A\in\AA\ \mid\ &\forall y,y'\in\braceckig{N},\forall x,x'\in\braceckig{M}:\\&y<y'\land A_{y,x}=A_{y',x'}=1\Rightarrow x<x'\big\}.
\end{align*}
This additional restriction prevents assignments which are not aware of the orders of $\IH$ and $\IG$, e.g., an assignment for which the first line of $\IH$ is assigned to the last one of $\IG$ and vice versa.

We focus on the top right four text lines of Fig.~\ref{fig:sp} to demonstrate the effect for a simple example, i.e., 
\begin{align*}
\IH=\brac{\HH_4,\HH_7,\HH_5,\HH_8}\quad,\quad
\IG=\brac{\GG_7,\GG_8,\GG_{10},\GG_{11}}.
\end{align*}
In this order the HYP and GT only differ in the sorting along columns and rows as well as in one error in the hypothesis $\HH_7$.
Without the reading order constraint we get $LD_\AA\brac{\IH,\IG}=LD\brac{\HH_7,\GG_{10}}=1$, because $W=\bracround{\brac{1,1},\brac{3,2},\brac{2,3},\brac{4,4}}$ is feasible and there are no errors in three out of the four assigned text lines.
In contrast, with the constraint $A\in\RAA$ we get $W=\bracround{\brac{1,1},\brac{3,2},\brac{4,4}}$. The assignment $\brac{2,3}$ is not allowed due to the reading order constraint. Consequently,  
$\HH_7$ and $\GG_{10}$ are not assigned, which results in $U=\bracround{2}$, $V=\bracround{3}$ and $LD_\AA\brac{\IH,\IG}=\bracabs{\HH_7}+\bracabs{\GG_{10}}=5$, 

Based on \eqref{E:LD-vanilla} we define
\begin{align}
\label{E:LD-ro}
LD^R\brac{\IH,\IG}:=LD_\RAA\brac{\IH,\IG}
\end{align}
as minimal LD between $\IH$ and $\IG$ that penalizes reading order errors and $CER^R\brac{\IH,\IG}:=\frac{LD^R\brac{\IH,\IG}}{\bracabs{\IG}}$.
\subsection{Using Geometric Information as Restriction}
\label{S:measure:geo}
Especially for tables with short text lines containing for instance  the age, the birth date or running numbers, it is possible that the minimization of \eqref{E:LD-vanilla} assigns a wrongly transcribed HYP text line to a GT text line which is located at an entirely different position in the image.
E.g., $\HH_9=\GG_{10}$ holds for the text lines of Fig.~\ref{fig:sp}, but their geometric positions do not match. An assignment of such kind could erroneously reduce the CER. 
Consequently, it makes sense to only allow assignments between $\IH_y$ and $\IG_x$ if their geometric positions match.
Again, the idea is to add restrictions for $\AA$, such that two text lines can only be assigned, if they are ``(geometrically) close'' to each other.
There are many possibilities to determine if two text lines are close to each other or not.
Here, the well-established method of \cite{DBLP:journals/corr/GruningLDKF17} is used. We say two text lines are close, if their baselines are geometrically close to each other (see Section \ref{S:Implementation:geo} for details).
Let $\NN\brac{\IG_x}\subset\IH$ be the set of all text lines in $\IH$, that are close to $\IG_x$.
We extend \eqref{E:A_vanilla} to
\begin{align}
\label{E:A-geom}
\GAA:=\big\{A\in\AA\mid A_{y,x}=1\Rightarrow \IH_y\in\NN\brac{\IG_x}\}
\end{align}
and modify \eqref{E:LD-vanilla} with $\AA^G$ 
\begin{align}
\label{E:LD-geom}
LD^G\brac{\IH,\IG}:=LD_\GAA\brac{\IH,\IG}
\end{align}
to define $CER^G\brac{\IH,\IG}:=\frac{LD^G\brac{\IH,\IG}}{\bracabs{\IG}}$.
\begin{figure*}[!t]
	\setlength{\tabcolsep}{3px}
	\centering
	\begin{minipage}{.7\textwidth}
		\includegraphics[width=1.0\textwidth]{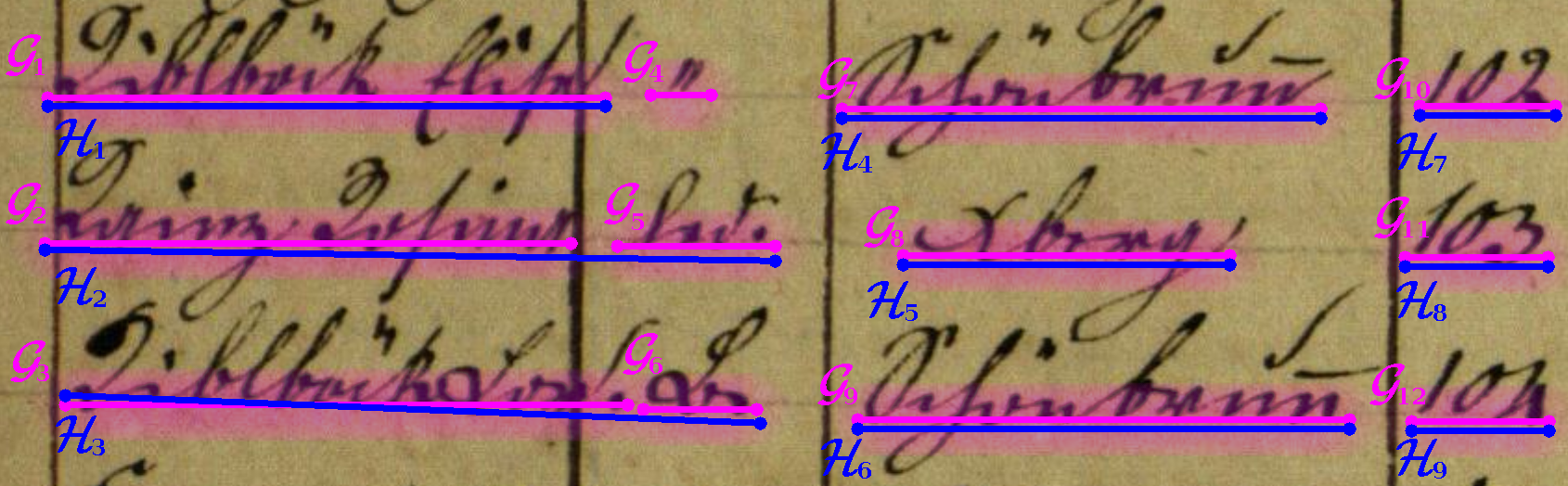}
	\end{minipage}
	\begin{minipage}{.28\textwidth}
		\smaller
		\begin{tabular}{|r|r|c|c|}
			\hline
			i & j & $\HH_i$ & $\GG_j$ \\
			\hline
			1 & 1 &Küblböck Elise&Küblböck Elise\\
			2 & 2 &Kainz Josina Led.&Kainz Josina\\
			3 & 3 &KüblböckLed. L.&Küblböck Led.\\
			& 4 & & " \\
			& 5 & & Led. \\
			& 6 & & L. \\
			4 & 7 & Schönbrunn & Schönbrunn \\
			5 & 8 & Aberg & Aberg \\
			6 & 9 & Schönbrunn & Schönbrunn \\
			7 &10 & 10 & 102 \\
			8 &11 & 103 & 103 \\
			9 &12 & 102 & 104 \\
			\hline 
		\end{tabular}
	\end{minipage}
	\caption{\textbf{Example of a table with column-wise sort of text lines.} Two common LA errors are missing baselines (see $\GG_4$) and erroneously merging of text lines (see $\brac{\GG_2,\GG_5}\leftrightarrow\HH_2$). Also the reading order can cause errors: In the HYP, the first two columns are merged together, so that the transcription "Led." of $\GG_5$ is ordered before $\HH_3$, but after $\GG_3$. Dependent on the configuration, these errors influence the measure (see Figure~\ref{fig:errortable}).
	}
	\label{fig:sp}
\end{figure*}
\begin{figure}[!h]
	\centering
	{
		\setlength{\tabcolsep}{3px}
		\begin{tabular}{|c|cccc|ccc|}
			\hline 
			\rule[-1ex]{0pt}{2.5ex}  & INS & DEL & SUB & COR & CER & Prec & Rec \\ 
			\hline 
			\rule[-1ex]{0pt}{2.5ex}  $CER^R$ & 9 & 8 & 1 & 70 & 22.5\% & 88.6\% & 88.1\% \\ 
			\hline 
			\rule[-1ex]{0pt}{2.5ex}  & INS & DEL & SUB & COR & WER & Prec & Rec \\ 
			\hline 
			\rule[-1ex]{0pt}{2.5ex}  $WER^R$ & 3 & 1 & 4 & 8 & 53.3\% & 61.5\% & 53.3\% \\ 
			\rule[-1ex]{0pt}{2.5ex}  $WER$ & 3 & 1 & 3 & 9 & 46.7\% & 69.2\% & 60.0\% \\ 
			\rule[-1ex]{0pt}{2.5ex}  $WER^G$ & 3 & 1 & 4 & 8 & 53.3\% & 61.5\% & 53.3\% \\ 
			\rule[-1ex]{0pt}{2.5ex}  $WER^S$ & 2 & 0 & 2 & 11 & 26.7\% & 84.6\% & 73.3\% \\ 
			\rule[-1ex]{0pt}{2.5ex}  $WER^{S,G}$ & 2 & 0 & 3 & 10 & 33.3\% & 76.9\% & 66.7\% \\ 
			\hline 
			\rule[-1ex]{0pt}{2.5ex}  & FN & FP & & TP & & Prec & Rec \\ 
			\hline 
			\rule[-1ex]{0pt}{2.5ex}  $BOW$ & 4 & 2 &  & 11 &  & 84.6\% & 73.3\% \\ 
			\rule[-1ex]{0pt}{2.5ex}  $BOW^G$ & 5 & 3 &  & 10 &  & 76.9\% & 66.7\% \\
			\hline 
		\end{tabular}
	}
	\caption{\label{fig:errortable}\textbf{Comparison of Measures.} The error rates are calculated from the transcripts and polygons shown in Figure~\ref{fig:sp}. For CER and WER we can define precison (Prec) and recall (Rec) similar to the measures of BOW (see \eqref{E:Prec},\eqref{E:Rec}). In this example $WER^S$ and $BOW$ result even in the same precision and recall values.
		Whereas $WER$ finds a correct assignment between $\HH_9$ and $\GG_{10}$, $WER^R$ and $WER^G$ avoid this either by the forced reading order or by the comparison of the corresponding baselines. If we allow segmentation errors, $WER^S$ correctly assigns $\HH_2$ to $\GG_2$ and $\GG_5$.  }
\end{figure}
\subsection{Non-Penalizing of Segmentation Errors}
\label{S:measure:seg}
If an LA does not detect a text line $\IG_x$, the LD increases by $\bracabs{\IG_x}$, as well as the LD increases by $\bracabs{\IH_y}$ for an erroneously detected text line $\IH_y$.
Even more crucial are falsely merged text lines.
For example, the hypothesis $\brac{\HH_2}$ of Figure~\ref{fig:sp} is an erroneously merged text line. For
\begin{align}
\label{E:G_H_seg}
\IH=\brac{\HH_2}\quad,\quad\IG=\brac{\GG_2,\GG_5},
\end{align}
the calculation  $LD\brac{\IH,\IG}$ leads to $U=\emptyset,V=\bracround{2},W=\bracround{\brac{1,1}}$ and $LD\brac{\IH,\IG}=LD\brac{\HH_2,\GG_2}+\bracabs{\GG_5}=5 + 4=9$. The resulting LD could be considered to be quite high based on the fact, that the recognized text is entirely correct, but merged.
The same argument is valid for an erroneous split of a text line.
Hence, it is meaningful to modify the LD calculation such that it does not penalizes this kind of split and merge errors.

It is assumed that these kind of segmentation errors are mainly caused by large gaps between words. As a result, the most common substitution for a line break is the space character $\visiblespace\in\A$ in the merged line. Hence, we allow to interpret a line break as space character and the other way around. This is achieved by allowing successive split operations at spaces and merge operations between lines to adjust $\IH$:
\begin{itemize}
	\item split operation: One line  $\fh=\IH_y$ with the space character $\fh_k=\visiblespace$ at position $k$ can be split into two lines $\f{a}=\brac{\fh_1,\ldots,\fh_{k-1}}$ and $\f{b}=\brac{\fh_{k+1},\ldots,\fh_{\bracabs{\fh}}}$,
	\item merge operation: Two subsequent lines $\f{a}=\IH_y$ and $\f{b}=\IH_{y+1}$ can be merged into one line $\brac{\f{a}_1,\ldots,\f{a}_{\bracabs{\f{a}}},\visiblespace,\f{b}_1,\ldots,\f{b}_{\bracabs{\f{b}}}}$.
\end{itemize}
We define the \emph{space of partition functions}
\begin{align}
\Psi:=&\left\{\Phi:\brac{\AStar}^*\to\brac{\AStar}^*\right\}
\end{align}
with $\Phi$ as a composition of split and merge operations.
We change \eqref{E:LD-vanilla} by optimizing over all $\Phi$ that minimizes the LD:
\begin{align}
\label{E:LD-seg}
LD^S\brac{\IH,\IG}:=\min_{\Phi\in\Psi}LD\brac{\Phi\brac{\IH},\IG}.
\end{align}
and get $CER^S\brac{\IH,\IG}:=\frac{LD^S\brac{\IH,\IG}}{\bracabs{\IG}}$. For the example in \eqref{E:G_H_seg} and the optimal $\Phi^*$ we get
\begin{align*}
\IH'=\Phi^*\brac{\IH}=\brac{\text{"Kainz Josina"},\text{"Led."}}
\end{align*}
which leads to $CER^S\brac{\IH,\IG}=CER\brac{\IH',\IG}=0$.

It has to be mentioned, that $\bracabs{\IH}\neq\bracabs{\Phi\brac{\IH}}$ is possible. Furthermore, for an optimal $\Phi^*$ there is no text line $\IH_y\in\Phi^*\brac{\IH}$ in $U$ which contains spaces, because a splitting of $\IH_y$ at this spaces would result in a lower LD. 
\subsection{Combination of Measure Modifications}
\label{S:measure:comb}
The equations \eqref{E:LD-ro}, \eqref{E:LD-geom} and \eqref{E:LD-seg} are defined as single modifications of \eqref{E:LD-vanilla}, whereby in many scenarios a combination of these modifications is reasonable:
For example to measure the quality of a text extraction method, the semantic meaning is important, which leads to the reading order restriction combined with the option to change the segmentation. We will denote combinations of configurations by adding all modification letters to the superscript (in the previous example: $CER^{R,S}$).
Having $3$ modifications we can choose between $2^3=8$ configuration-dependent CER measures. 

Besides the possibility to evaluate the quality of an HTR engine under different restrictions, a meaningful comparison of the results for different measure configurations allows for an examination of the categories of the main errors of this system.
\subsection{From CER over WER to BOW}
\label{S:measure:cer-wer-bow}
The WER can be determined based on the CER methodology introduced in this chapter. If $\A$ is not chosen as alphabet of characters but of words instead, everything in Section \ref{S:Measure} holds and the $CER$ becomes the $WER$. Hence, the $WER$ with all different configurations can be calculated.

There is no general definition of how to transform a sequence of character into a sequence of words. For example, the sequence ``it's'' could be divided into one, two or three words. Since in the most cases the user has his own idea of ``words'', we provide a simple interface to integrate own word tokenizers\footnote{Interface: https://github.com/Transkribus/TranskribusInterfaces/blob/ master/src/main/java/eu/transkribus/interfaces/ITokenizer.java}.
A basic tokenizer, that splits a character sequence at spaces, is implemented as default.
For Figure~\ref{fig:errortable} this tokenizer is used.

At first glance $CER$ does not have much in common with $BOW$. However, by successively changing the configurations, we can close the gap between these measures:
\begin{align*}
CER^R\leftrightarrow WER^R\leftrightarrow WER \leftrightarrow WER^S \leftrightarrow BOW
\end{align*}
So far, it is not obvious, why $WER^S \leftrightarrow BOW$ is reasonable.
For the $WER$ calculation we do not only count the manipulations insertion, deletion and substitution, we also count the number of correctly assigned characters/words ($COR$).
For the $BOW$ measure the false positive ($FP$), the false negative ($FN$) and the true positive ($TP$) words are counted.
So we can define precision and recall for $WER$ and $CER$ with similar counts used in $BOW$:
\begin{align}
\label{E:Prec}
Prec&:=\frac{COR}{\bracabs{HYP}}\quad\leq\frac{TP}{TP+FP}=\frac{TP}{\bracabs{HYP}} \\
\label{E:Rec}
Rec&:=\frac{COR}{\bracabs{GT}}\quad\leq\frac{TP}{TP+FN}= \frac{TP}{\bracabs{GT}},
\end{align}
whereas $\bracabs{GT}$ and $\bracabs{HPY}$ are the number of characters/words in GT and HYP.
Note that in Figure~\ref{fig:errortable} for precision and recall for $WER^S$ and $BOW$ are equal, even with additional geometric restrictions.
Since $WER^S$ is constructed to minimize the LD, which only implicitly maximizes $COR$, the inequality is obvious. 
But if the lines of $\IG$ or $\IH$ are single words, it follows equality and we closed the gap between $WER^S$ and $BOW$.

\section{Algorithm description}
\label{S:Implementation}
In this section the implementation details for four out of the eight possible measure configurations, namely $LD,\ LD^R,\ LD^{R,G},\ LD^{R,S}$, are described. Furthermore, it is discussed if the proposed algorithms result in the minimal LDs -- if they solve the minimization problems see \eqref{E:LD-vanilla}, \eqref{E:LD-ro}, \eqref{E:LD-geom} and \eqref{E:LD-seg} exactly -- or not.

Since the set of possible assignment matrices \eqref{E:A_vanilla} allows for arbitrary line permutations of $\IH_y,y\in\braceckig{N}$, its cardinal number exceeds the factorial number $N!$. Consequently, for practical relevant values of $N$ the calculation of $LD$ -- the optimization of \eqref{E:LD-vanilla} -- becomes intractable and will not be computed exactly. In Sec.~\ref{S:Implementation:RO} a greedy algorithm is introduced to find the (greedy-)optimal assignment matrix $A\in\AA$.
In the cases of $LD^R,\ LD^{R,G},\ LD^{R,S}$, the constraint of a fixed reading order, see Sec.~\ref{S:measure:ro}, allows for the formulation of exact algorithms. In Sec.~\ref{S:Implementation:vanilla} - \ref{S:Implementation:SEG} these algorithms are introduced and it is proven that they result in global minima for the LDs.

As shown in Section~\ref{S:Measure}, the LD can be calculated using dynamic programming over subsequences between  $\fh,\fg\in\AStar$ which leads to a two-dimensional calculation problem.
Because of $\IH,\IG\in\brac{\AStar}^*$, the dynamic programming becomes four-dimensional.
We avoid this by flattening $\IH,\IG$ to one dimension in a first step, such that the dynamic programming remains two-dimensional. 
Therefore, we add the artificial line break character $\lbsymb\ \notin\A$ to the alphabet and get $\AExt:=\A\cup\bracround{\lbsymb}$.
Let
\begin{align}
\label{E:f}
f:\brac{\AStar}^*\to\AStarExt
\end{align}
be the invertible \emph{flatten function} that concatenates the text line and puts $\lbsymb$ before, between and after the lines. For example we obtain
\begin{align*}
f\brac{\brac{a,b},\brac{c,d},\brac{e,f}}=\brac{\lbsymb,a,b,\lbsymb,c,d,\lbsymb,e,f,\lbsymb}.
\end{align*}
Finally, the flattened hypothesis and ground truth lines are defined as 
$\fh:=f\brac{\IH},\fg:=f\brac{\IG}$ with $\fh,\fg\in\AStarExt$.

In the next sections configure-dependent equations to calculate the LDs for the different restrictions are proposed.

\subsection{Exact Calculation of $LD^R\brac{\IH,\IG}$}
\label{S:Implementation:vanilla}
We use the recursion defined in \eqref{E:D_classic} and expand it to calculate the LD across text lines for the flattened $\fh,\fg$. For that purpose, we expand \eqref{E:d_classic} to
\begin{align}
\label{E:d_vanilla}
\d{i}{j}^R:=\begin{cases}
0 &\text{ if } \fh_i=\fg_j\\
1 &\text{ if } \fh_i\neq\fg_j \land \fh_i,\fg_j\in\A\\
\infty &\text {else}
\end{cases}.
\end{align}
This adaptation will prevent substitutions of usual characters by line break characters and vice versa.
Consequently, only line breaks can be mapped to each other. Hence, this enforces a direct comparison of entire text lines instead of parts of text lines.
Let $\fl_\fh\in \braceckig{\bracabs{\fh}}^{\bracabs{\IH}+1}$ be the tuple of line break positions in $\fh$, whereas $\fl_\fh^y:=\brac{\fl_\fh}_y\in\braceckig{\bracabs{\fh}}$ is the index of the $y$-th line break in $\fh$.
The tuple $\fl_\fg$ is defined in the same manner.
For simplification we use the notation of the cross product of sets for tuples:
\begin{align*}
\brac{i,j}\in\fl_\fh\times\fl_\fg&\Leftrightarrow i\in\fl_\fh\land j\in\fl_\fg.
\end{align*}
For index pairs $\brac{i,j}\in\fl_\fh\times\fl_\fg$ which represent line breaks at $i=\fl_\fh^y$ and $j=\fl_\fg^x$, we modify the distance calculation in \eqref{E:D_classic} to allow for the deletion and insertion of lines
\begin{align}
\label{E:D_vanilla}
\D{i}{j}^R=\min\left\{\begin{array}{ll}
\D{i-1}{j-1}^R\\
\D{\fl_\fh^{y-1}}{j}^R+\bracabs{\IH_{y-1}} &\text { if } y\geq 2\\
\D{i}{\fl_\fg^{x-1}}^R+\bracabs{\IG_{x-1}} &\text { if } x\geq 2
\end{array}\right\},
\end{align}
for other index pairs $\brac{i,j}\in \brac{ \braceckig{\bracabs{\fh}}\times \braceckig{\bracabs{\fg}}}\setminus\brac{\fl_\fh\times\fl_\fg}$ we set
\begin{align}
\label{E:D_vanilla2}
\D{i}{j}^R=\min\left\{\begin{array}{ll}
\D{i-1}{j-1}^R+\d{i}{j}^R\\
\D{i-1}{j}^R+1 &\text { if } \fh_i\neq\lbsymb\\
\D{i}{j-1}^R+1 &\text { if } \fg_j\neq\lbsymb
\end{array}\right\}.
\end{align}
In the following, we use the term \textit{points} for index pairs.


\begin{theorem}[Minimal $LD^R$ calculation]
\label{theo:LD-vanilla}
Let $\fh=f\brac{\IH}$ and $\fg=f\brac{\IG}$ be the flattened sequences. The following equality holds 
\begin{align}
\label{E:proof}
LD^R\brac{\IH,\IG}=\D{\bracabs{\fh}}{\bracabs{\fg}}^R.
\end{align}
\end{theorem}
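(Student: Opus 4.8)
The plan is to prove \eqref{E:proof} by induction over the \emph{synchronization points} of the flattened grid, i.e. the index pairs $\brac{\fl_\fh^y,\fl_\fg^x}$ where both coordinates carry a line break. Abbreviating $S\brac{y,x}:=\D{\fl_\fh^y}{\fl_\fg^x}^R$ for $y\in\braceckig{N+1}$ and $x\in\braceckig{M+1}$, I would establish the stronger statement
\begin{align*}
S\brac{y,x}=LD^R\brac{\brac{\IH_1,\ldots,\IH_{y-1}},\brac{\IG_1,\ldots,\IG_{x-1}}},
\end{align*}
that is, $S\brac{y,x}$ is the minimal reading-order distance of the first $y-1$ hypothesis lines and the first $x-1$ ground truth lines. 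Since $\fl_\fh^{N+1}=\bracabs{\fh}$ and $\fl_\fg^{M+1}=\bracabs{\fg}$, the case $\brac{y,x}=\brac{N+1,M+1}$ is exactly \eqref{E:proof}.

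First I would prove a \emph{block lemma} stating that between two consecutive synchronization points the restricted recursion degenerates to the ordinary Levenshtein DP: for $y,x\geq2$,
\begin{align*}
\D{\fl_\fh^y-1}{\fl_\fg^x-1}^R=S\brac{y-1,x-1}+LD\brac{\IH_{y-1},\IG_{x-1}}.
\end{align*}
The mechanism is that line breaks are impassable. By \eqref{E:d_vanilla} a diagonal step pairing $\lbsymb$ with an ordinary character costs $\infty$, and by \eqref{E:D_vanilla2} a line break may never be inserted or deleted alone. Hence on the left edge $i=\fl_\fh^{y-1}$ of the block only the move $\D{i}{j-1}^R+1$ survives, which yields $\D{\fl_\fh^{y-1}}{j}^R=S\brac{y-1,x-1}+\brac{j-\fl_\fg^{x-1}}$, and symmetrically on the bottom edge. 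These are precisely the ordinary Levenshtein boundary values shifted by the corner value $S\brac{y-1,x-1}$, while every interior cell of the block uses the unrestricted rule \eqref{E:D_vanilla2} and only references cells of the same block. Consequently the block computes $LD\brac{\IH_{y-1},\IG_{x-1}}$ on top of $S\brac{y-1,x-1}$; the degenerate case of an empty line is absorbed by the same edge formula.

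Plugging the block lemma into \eqref{E:D_vanilla} collapses its three options into the clean recursion
\begin{align*}
S\brac{y,x}=\min\bracround{\begin{array}{l}
S\brac{y-1,x-1}+LD\brac{\IH_{y-1},\IG_{x-1}}\\
S\brac{y-1,x}+\bracabs{\IH_{y-1}}\\
S\brac{y,x-1}+\bracabs{\IG_{x-1}}
\end{array}},
\end{align*}
with the obvious existence guards and base values $S\brac{1,1}=0$, $S\brac{1,x}=\sum_{x'=1}^{x-1}\bracabs{\IG_{x'}}$, $S\brac{y,1}=\sum_{y'=1}^{y-1}\bracabs{\IH_{y'}}$ (here one checks that the literal boundary $\D{0}{j}=j$ is always dominated, as it would also charge for deleted line breaks). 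I would then verify that the right-hand side $LD^R$ of the prefixes satisfies the identical recursion by a case analysis on the last two lines in an optimal $A\in\RAA$: either $\IH_{y-1}$ is unmatched, or $\IG_{x-1}$ is unmatched, or both are matched -- in which case the reading-order constraint forces them to be matched to \emph{each other}, since the largest hypothesis index can only be paired with the largest matched ground truth index and matching $\IH_{y-1}$ to an earlier line would leave $\IG_{x-1}$ without an admissible partner. These cases are exhaustive, so both quantities obey the same recursion and base cases.

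By induction $S\brac{y,x}$ equals $LD^R$ of the corresponding prefixes for all $\brac{y,x}$, and specializing to $\brac{N+1,M+1}$ gives \eqref{E:proof}. I expect the block lemma to be the main obstacle: one must argue carefully that the $\infty$-penalty together with the two insertion/deletion guards of \eqref{E:d_vanilla}--\eqref{E:D_vanilla2} genuinely force every line break to align with a line break, so that the two-dimensional flattened DP never mixes characters of different lines and each block reduces to an independent ordinary Levenshtein computation.
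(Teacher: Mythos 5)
Your proof is correct and follows essentially the same route as the paper's: both force the line breaks of the flattened sequences to align (via the $\infty$-cost in \eqref{E:d_vanilla} and the guards in \eqref{E:D_vanilla2}), decompose the DP at the last synchronization point into a prefix problem plus either an ordinary Levenshtein comparison of the last lines or a whole-line deletion, and match these three options against the three terms of \eqref{E:LD-vanilla} under the reading-order constraint. The only differences are organizational: you run the induction over all prefix pairs $\brac{y,x}$ with an explicit block lemma, whereas the paper inducts on $K=\bracabs{\IH}+\bracabs{\IG}$ and asserts the block decomposition directly; your version also spells out the minor point that the classical boundary values $\D{0}{j}=j$ are dominated, which the paper glosses over.
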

\begin{proof}
If for each point $\brac{i,j}$ the minimal predecessor is stored and the final value $LD^R\brac{\fh,\fg}=\D{\bracabs{\fh}}{\bracabs{\fg}}^R$ is calculated, the path leading to the minimal LD can be recursively reconstructed, starting from point $\brac{\bracabs{\fh},\bracabs{\fg}}$ until ending in $\brac{0,0}$. 
\begin{align*}
P:=\brac{\brac{0,0},\ldots,\brac{\bracabs{\fh},\bracabs{\fg}}}\in\brac{\IN^2}^*
\end{align*}
the \emph{best path}. Due to \eqref{E:d_vanilla} and \eqref{E:D_vanilla} the path contains all line breaks of $\fh$ and $\fg$. 
As shown in Alg. \ref{alg:splitbestpath} $U,V$ and $W$ can be obtained from $P$.
We use induction over the number of accumulated lines in $\IH$ and $\IG$ (which is $K=\bracabs{\IH}+\bracabs{\IG}$), to show that \eqref{E:proof} holds.\\
For $K=0$ we have $\IH=\IG=\emptyset$ and $LD\brac{\IH,\IG}=0$.\\
For $K\geq1$ with $\IH=\emptyset,\fh=\brac{\lbsymb}$ and $\bracabs{\IG}=K\geq1$, \eqref{E:d_vanilla} and \eqref{E:D_vanilla} result into one single path
\begin{align*}
P=\brac{\brac{0,0},\brac{1,\fl_\fg^1},\ldots,\brac{1,\fl_\fg^{M+1}}}
\end{align*}
and we can calculate the LD
\begin{align*}
\D{1}{\fl_\fg^{1}}^R&=\D{1}{1}^R=\D{0}{0}^R+0=0\\
\D{1}{\fl_\fg^{x}}^R&=\D{1}{\fl_\fg^{x-1}}^R+\bracabs{\IG_{x-1}}=\sum_{j=1}^{x-1} \bracabs{\IG_j}\\
LD^R\brac{\fh,\fg}&=\D{\bracabs{\fh}}{\bracabs{\fg}}^R=\D{1}{\fl_\fg^{M+1}}^R=\sum_{j=1}^{M} \bracabs{\IG_j}.
\end{align*}
The same argument can be used for the calculation of $\bracabs{\IH}\geq1$ and $\IG=\emptyset$.

Now, we apply induction over $K$ for $\bracabs{\IH},\bracabs{\IG}\geq 1$.
Let $\IH':=\IH\setminus\bracround{\IH_{\bracabs{\IH}}}$ and $\IG':=\IG\setminus\bracround{\IG_{\bracabs{\IG}}}$ be tuples of text lines without the last text line.
As induction hypothesis we assume $LD^R\brac{\IH',\IG'}$ ($=K-2$), $LD^R\brac{\IH,\IG'}$ and $LD^R\brac{\IH',\IG}$ ($=K-1$) are correctly calculated.
We will show that we can calculate $LD^R\brac{\IH,\IG}=\D{\bracabs{\fh}}{\bracabs{\fg}}^R$ using the induction hypothesis.

Let $\fh':=f\brac{\IH'}$ and $\fg':=f\brac{\IG'}$ be the flattened HYP and GT.
Since $\forall i\in\bracabs{\fh'}:\fh'_i=\fh_i$ it follows \eqref{E:D_vanilla} will be the same no matter if we compare with $\IH$ or $\IH'$.
The same argument holds for $\IG$ and $\IG'$.

All paths ending in the point $\brac{{\bracabs{\fh}-1},{\bracabs{\fg}-1}}$ contain $\brac{{\fl_\fh^{\bracabs{\fh}-1}},{\fl_\fg^{\bracabs{\fg}-1}}}=\brac{{\fl_{\fh'}^{\bracabs{\fh'}}},{\fl_{\fg'}^{\bracabs{\fg'}}}}=\brac{{\bracabs{\fh'}},{\bracabs{\fg'}}}$.
So we separately calculate the LD for both parts, which is
\begin{align*}
\D{\bracabs{\fh}-1}{\bracabs{\fg}-1}^R=LD^R\brac{\IH',\IG'}+LD^R\brac{\IH_\bracabs{\IH},\IG_\bracabs{\IG}}.
\end{align*}
If we set $i=\bracabs{\fh}=\fl_{\fh}^{\bracabs{\IH}+1}$, $j=\bracabs{\fg}=\fl_{\fg}^{\bracabs{\IG}+1}$ in \eqref{E:D_vanilla} and use $\fl_{\fh}^{\bracabs{\IH}}=\fl_{\fh'}^{\bracabs{\IH'}+1}=\bracabs{\fh'}$ and $\fl_{\fg}^{\bracabs{\IG}}=\fl_{\fg'}^{\bracabs{\IG'}+1}=\bracabs{\fg'}$ we get
\begin{align*}
\D{\bracabs{\fh}}{\bracabs{\fg}}^R&=\min\left\{\begin{array}{l}
\D{\bracabs{\fh}-1}{\bracabs{\fg}-1}^R\\
\D{\fl_{\fh}^{\bracabs{\IH}}}{\bracabs{\fg}}^R+\bracabs{\IH_{\bracabs{\IH}}}\\
\D{\bracabs{\fh}}{\fl_{\fg}^{\bracabs{\IG}}}^R+\bracabs{\IG_{\bracabs{\IG}}}
\end{array}\right\}\\
&=\min\left\{\begin{array}{l}
LD^R\brac{\IH',\IG'}+LD^R\brac{\IH_\bracabs{\IH},\IG_\bracabs{\IG}}\\
LD^R\brac{\IH',\IG}+\bracabs{\IH_{\bracabs{\IH}}}\\
LD^R\brac{\IH,\IG'}+\bracabs{\IG_{\bracabs{\IG}}}
\end{array}\right\}
\end{align*}
Each row indicates how $U$, $V$ and $W$ are expanded over the recursion:
When the first row is the minimum this leads to $\brac{N,M}\in W$, whereby when the second (third) row is the minimum we have $N\in U$ (or $M\in V$).
So $LD\brac{\IH,\IG}=\D{\bracabs{\fh}}{\bracabs{\fg}}^R$ is the minimum of these three sub problems with additional costs as defined in \eqref{E:LD-vanilla}.
\end{proof}
\begin{algorithm}
	\small
	\RestyleAlgo{boxed}
	\DontPrintSemicolon
	\SetKwInOut{Input}{input}
	\SetKwInOut{Output}{output}
	\SetKwComment{Comment}{\%}{}
	\Input{$P,\fl_{\fh},\fl_\fg$}
	\Output{$U,V,W$}
	\BlankLine	\BlankLine
	$U,V,W\gets \emptyset$\;
	$\fp\gets P_2$\Comment*{$P_1=\brac{0,0}$ is not of interest}
	\For{$i=3,\ldots,\bracabs{P}$}{
		$\fq\gets P_i$\;
		\If{$\fq_2\in\fl_\fg$}{
			\Comment{Found line break in $\fg$ ($\fg_{\fq_2}=\lbsymb$)}
			$x\gets index\brac{\fl_\fg;\fp_2}$\Comment*{$x$-th $\lbsymb$ in $\fg$}
			$x'\gets index\brac{\fl_\fg;\fq_2}$\Comment*{$x'$-th $\lbsymb$ in $\fg$}
			$y\gets index\brac{\fl_\fh;\fp_1}$\Comment*{$y$-th $\lbsymb$ in $\fh$}
			$y'\gets index\brac{\fl_\fh;\fq_1}$\Comment*{$y'$-th $\lbsymb$ in $\fh$}
			\If{$y<y'$}{
				\If{$x<x'$}{
					$W\gets W\cup\bracround{\brac{y,x}}$\Comment*[r]{$\IH_y$ maps $\IG_x$}
				}\Else{
					$U\gets U\cup\bracround{y}$\Comment*[r]{delete $\IH_y$}
				}
			}\Else{
				$V\gets V\cup\bracround{x}$\Comment*[r]{delete $\IG_x$}
			}	
			$\fp\gets \fq$\Comment*{end point is the new start}
		}
	}
	\Return{$U,V,W$}
	\caption{\label{alg:splitbestpath}SplitBestPath}
\end{algorithm}

The calculation of $\D{\bracabs{\fh}}{\bracabs{\fg}}^R$ can be formulated as shortest path problem. Therefore, we search the shortest path from point $\brac{0,0}$ to $\brac{i,j}$, which indicates the minimal cost to map $\fh_{1:i}$ to $\fg_{1:j}$. For $\brac{i,j}=\brac{\bracabs{\fh},\bracabs{\fg}}$ we obtain $LD^R\brac{\IH,\IG}=\D{\bracabs{\fh}}{\bracabs{\fg}}^R$. Since at each point we calculate the minimum over other points with additional non-negative costs, we can use the \emph{Dijkstra Algorithm} to solve this problem \cite{Dijkstra1959}. Especially for a low CER this algorithm can skip the calculation of many points $\brac{i,j}\in \braceckig{\bracabs{\fh}}\times \braceckig{\bracabs{\fg}}$. The implementation is done in Java and freely available on GitHub\footnote{\ifthenelse{\blind}{BLIND REPO}{https://github.com/CITlabRostock/CITlabErrorRate}} under the Apache License.
\subsection{Restricting by Geometric Position}
\label{S:Implementation:geo}
As mentioned in Section \ref{S:measure:geo} it is reasonable to allow $\brac{y,x}\in W$, only if $\IH_y$, $\IG_x$ are geometrically close to each other ($\IH_y\in\NN\brac{\IG_x}$).
To define the neighborhood of $\IG_x$ we use a method that compares the so-called \emph{baselines} of the text lines.
This is a common measure to evaluate the performance of a layout analysis result \cite{DBLP:journals/corr/GruningLDKF17}.
We call the tuple of two-dimensional points $B=\brac{B_1,\ldots,B_{\bracabs{B}}}\in\IP:=\brac{\IN^2}^*$a baseline.
We define
\begin{align*}
\FP^\IH=\brac{\FP_1^\IH,\ldots,\FP_N^\IH}\in\IP^N
\end{align*}
as tuple of polygons corresponding to $\IH$ and let $\FP^\IG$ be defined in the same manner for $\IG$.
From \cite[Section III A. 3)]{DBLP:journals/corr/GruningLDKF17} we use the \emph{Coverage Function} $COV:\IP\times\IP\times\IR\to\braceckig{0,1}\subset\IR$, which calculates the overlapping between two baselines for a given tolerance value.
The tolerance value $t:\IP^*\times \IN \to \IR$ is dependent on the geometric position of all ground truth baselines and the index of the ground truth baseline of interest (cf. \cite[Section III A. 2)]{DBLP:journals/corr/GruningLDKF17}).

We set
\begin{align*}
\NN\brac{\IG_x}:=\bracround{\IH_y\in\IH \mid COV\brac{\FP_y^\IH,\FP^\IG_x,t\brac{\FP^\IG,x}}>0.0},
\end{align*}
which implicitly restricts the set of valid assignment matrices in \eqref{E:A_vanilla}.
Indeed, setting baselines to be close if they have any connection is probably a very soft restriction, but reasonable to avoid erroneously non-assignments for close $\IH_y$ and $\IG_x$.
We modify \eqref{E:D_vanilla} with $i=\fl_{\fh}^y$ and $j=\fl_{\fg}^x$ by
\begin{align}
\label{E:D_geom}
\hspace{-0.2cm}\D{i}{j}^{R,G}=\min\hspace{-0.1cm}\left\{\hspace{-0.2cm}\begin{array}{ll}
\D{i-1}{j-1}^{R,G}&\hspace{-0.6cm}\text{if } \red{\IH_{y-1}\in\NN\brac{\IG_{x-1}}}\\
\D{\fl_\fh^{y-1}}{j}^{R,G}+\bracabs{\IH_{y-1}} &\text {if } y\geq 2\\
\D{i}{\fl_\fg^{x-1}}^{R,G}+\bracabs{\IG_{x-1}} &\text {if } x\geq 2
\end{array}\hspace{-0.2cm}\right\}.
\end{align}
\begin{theorem}[Minimal $LD^{R,G}$ calculation]
Let $\fh=f\brac{\IH}$ and $\fg=f\brac{\IG}$ the flattened sequences. The equation 
\begin{align*}
LD^{R,G}\brac{\IH,\IG}=\D{\bracabs{\fh}}{\bracabs{\fg}}^{R,G}
\end{align*}
holds.
\end{theorem}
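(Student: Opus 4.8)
The plan is to follow the induction proof of Theorem~\ref{theo:LD-vanilla} almost line for line, since the recursion \eqref{E:D_geom} differs from \eqref{E:D_vanilla} only in that the diagonal (line-matching) step is now gated by the geometric predicate $\IH_{y-1}\in\NN\brac{\IG_{x-1}}$. I would again induct on $K=\bracabs{\IH}+\bracabs{\IG}$. Before starting, however, I would fix one point that does not arise for $LD^R$: the neighbourhood $\NN$ depends, through the tolerance $t$, on the full tuple of ground-truth baselines $\FP^\IG$, so it must be treated as a single admissibility relation determined once by the whole input and never recomputed on sub-tuples. Accordingly I would phrase the inductive statement as ``the recursion \eqref{E:D_geom} equals the minimum in \eqref{E:LD-geom} over $\GAA\cap\RAA$ with respect to this fixed relation,'' which is exactly the quantity the dynamic program evaluates.

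With that convention the base cases are immediate and identical to Theorem~\ref{theo:LD-vanilla}: if either $\IH$ or $\IG$ is empty there are no assigned pairs, the geometric gate is never reached, and the single deletion path gives the same value as before.

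For the inductive step with $\bracabs{\IH},\bracabs{\IG}\geq1$, I would put $\IH'=\IH\setminus\{\IH_{\bracabs{\IH}}\}$, $\IG'=\IG\setminus\{\IG_{\bracabs{\IG}}\}$ and expand \eqref{E:D_geom} at $\brac{\bracabs{\fh},\bracabs{\fg}}$. As in Theorem~\ref{theo:LD-vanilla}, the three rows correspond to matching the last lines (adding $\brac{N,M}$ to $W$), deleting $\IH_N$ (adding $N$ to $U$), and deleting $\IG_M$ (adding $M$ to $V$); the two deletion rows are verbatim those of \eqref{E:D_vanilla} and, by the induction hypothesis, equal $LD^{R,G}\brac{\IH',\IG}+\bracabs{\IH_N}$ and $LD^{R,G}\brac{\IH,\IG'}+\bracabs{\IG_M}$. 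The only new feature is that the matching row is present exactly when $\IH_N\in\NN\brac{\IG_M}$, which is precisely the condition under which $A_{N,M}=1$ is admissible in $\GAA$; when present it equals $LD^{R,G}\brac{\IH',\IG'}+LD\brac{\IH_N,\IG_M}$ by the same path-splitting argument used before. Hence the recursion offers the matching option if and only if $\GAA$ permits it, and the three-way minimum reproduces \eqref{E:LD-vanilla} with its feasible set intersected with $\GAA$.

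The step I expect to be the main obstacle is justifying that this local decomposition is legitimate, i.e.\ that an optimal $A\in\GAA\cap\RAA$ really does split into an optimal assignment on the last pair plus an optimal assignment on the remaining sub-tuples. This rests on the observation that the constraint defining $\GAA$, namely $A_{y,x}=1\Rightarrow\IH_y\in\NN\brac{\IG_x}$, is a per-pair gate that couples nothing across the matrix: deleting the matched last lines leaves every surviving entry's admissibility unchanged, because $\NN$ is the fixed global relation, so a restricted optimal assignment stays feasible for the subproblem, and conversely any optimal subproblem assignment extends by the chosen last-line move. Once this is recorded, the induction closes exactly as for $LD^R$, and the claimed equality $LD^{R,G}\brac{\IH,\IG}=\D{\bracabs{\fh}}{\bracabs{\fg}}^{R,G}$ follows.
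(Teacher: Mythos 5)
Your proposal is correct and follows essentially the same route as the paper: both reduce the claim to Theorem~\ref{theo:LD-vanilla} and observe that the geometric condition is a per-pair gate on the first (matching) row of the recursion, so a pair enters $W$ exactly when $\IH_y\in\NN\brac{\IG_x}$. Your version is somewhat more explicit than the paper's --- the paper only argues that every assignment produced by the dynamic program satisfies the constraint, whereas you also re-run the induction to confirm that no feasible assignment in $\GAA\cap\RAA$ is missed, and you correctly flag that $\NN$ must be treated as a fixed global relation --- but these are elaborations of the same argument, not a different one.
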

\begin{proof}
	We use Theorem~\ref{theo:LD-vanilla} to prove that the additional constrained described in \eqref{E:A-geom} are fulfilled by the changes between \eqref{E:D_vanilla} and \eqref{E:D_geom}.
	Let  $A_{y,x}=1$, then $\IH_y\in\NN\brac{\IG_x}$ have to be shown. From $A_{y,x}=1$ it follows $\brac{y,x}\in W$.
	But in the proof of Theorem~\ref{theo:LD-vanilla} it is shown that $\brac{y,x}\in W$ can only be achieved if in \eqref{E:D_geom} (and \eqref{E:D_vanilla} the minimum is reached in the first row. This is only possible, if $\IH_y\in\NN\brac{\IG_x}$.
\end{proof}
	

\subsection{Non-Penalizing Segmentation Error}
\label{S:Implementation:SEG}
If we allow $\Phi\in\Psi$ to be applied to $\IH$, we have to modify the LD calculation at some positions.
As argued in Section \ref{S:Measure} we allow to map $\lbsymb$ to $\visiblespace$ without costs and vice versa.
We define $\bfl_\fh$ as expansion of $\fl_\fh$ by also containing the positions of the space character $\visiblespace\in\A$. We modify \eqref{E:d_vanilla} by
\begin{align}
\label{E:d_phi}
\d{i}{j}^{R,S}:=\begin{cases}
0 &\text{ if } \fh_i=\fg_j\\
1 &\text{ if } \fh_i\neq\fg_j \land \fh_i,\fg_j\in\A\red{\setminus\bracround{\visiblespace}}\\
\red{0} &\red{\text{ if } \fh_i=\lbsymb\land\fg_j=\visiblespace}\\
\infty &\text {else}
\end{cases}
\end{align}
and \eqref{E:D_vanilla} in points $\brac{i,j}$ with $i=\bfl_\fh^y$ and $j=\fl_\fg^x$  by
\begin{align}
\label{E:D_phi}
\D{i}{j}^{R,S}=\min\left\{\hspace{-0.2cm}\begin{array}{ll}
\D{i-1}{j-1}^{R,S}\\
\D{\bfl_\fh^{y-1}}{j}^{R,S}+\red{\bfl_\fh^y-\bfl_\fh^{y-1}-1} &\text {if } y\geq 2\\
\D{i}{\fl_\fg^{x-1}}^{R,S}+\bracabs{\IG_{x-1}} &\text {if } x\geq 2
\end{array}\hspace{-0.2cm}\right\},
\end{align}
which also allows to skip single words. This leads to the updated Algorithm \ref{alg:splitbestpathPhi}, which implicitly returns the best segmentation $\IH':=\Phi\brac{\IH}$.
\begin{algorithm}
	\small
	\RestyleAlgo{boxed}
	\DontPrintSemicolon
	\SetKwInOut{Input}{input}
	\SetKwInOut{Output}{output}
	\SetKwInOut{ret}{return}
	\SetKwComment{Comment}{\%}{}
	\Input{$P,\fl_\fg,\fh$}
	\Output{$U,V,W\red{,\IH'}$}
	\BlankLine	\BlankLine
	$U,V,W\gets \emptyset$\;
	\red{$\IH'\gets \braceckig{\;}$}\;
	$\fp\gets P_2$\Comment*{$P_1=\brac{0,0}$ is not of interest}
	\For{$i=3,\ldots,\bracabs{P}$}{
		$\fq\gets P_i$\;
		\If{$\fq_2\in\fl_\fg$}{
			\Comment{Found line break in $\fg$ ($\fg_{\fq_2}=\lbsymb$)}
			$x\gets index\brac{\fl_\fg;\fp_2}$\Comment*{$x$-th $\lbsymb$ in $\fg$}
			$x'\gets index\brac{\fl_\fg;\fq_2}$\Comment*{$x'$-th $\lbsymb$ in $\fg$}
			\If{$\fp_1<\fq_1$}{
				\red{$\fh'\gets \fh_{\fp_1+1:\fq_1-1}$}\;
				\red{$\fh'\gets replace\brac{\fh';\lbsymb;\visiblespace}$}\;
				\red{$\IH'.append\brac{\fh'}$}\;
				\If{$x<x'$}{
					$W\gets W\cup\bracround{\brac{\red{\bracabs{\IH'}},x}}$\Comment*[r]{$\IH_y$ maps $\IG_x$}
				}\Else{
					$U\gets U\cup\bracround{\red{\bracabs{\IH'}}}$\Comment*[r]{delete $\IH_y$}
				}
			}\Else{
				$V\gets V\cup\bracround{x}$\Comment*[r]{delete $\IG_x$}
			}	
			$\fp\gets \fq$\Comment*{end point is the new start}
		}
	}
	\Return{$U,V,W\red{,\IH'}$}
	\caption{\label{alg:splitbestpathPhi}SplitBestPathWithSegmentation}
\end{algorithm}
\begin{theorem}[Minimal $LD^{R,S}$ calculation]
	Let
	\begin{align*}
	\Phi^*=\argmin\limits_{\Phi\in\Psi}LD^R\brac{\Phi\brac{\IH},\IG}
	\end{align*}
	be the best partition minimizing \eqref{E:LD-seg}.
	For the LD calculated by \eqref{E:d_phi} and \eqref{E:D_phi} 
	\begin{align*}
	LD^R\brac{\IH^*,\IG}=LD^{R,S}\brac{\IH,\IG}:=\D{\bracabs{\fh}}{\bracabs{\fg}}^{R,S}
	\end{align*}
	holds. Algorithm \ref{alg:splitbestpathPhi} returns the best partition $\IH^*=\Phi^*\brac{\IH}$.
\end{theorem}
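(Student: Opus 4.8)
The plan is to reduce the statement to Theorem~\ref{theo:LD-vanilla}. Writing $\fh^\Phi:=f\brac{\Phi\brac{\IH}}$ for the flattening of a re-segmented hypothesis, I would first set up a bijection between partition functions $\Phi\in\Psi$ and admissible re-labelings of the breakable positions of $\fh$, then invoke Theorem~\ref{theo:LD-vanilla} for each fixed $\Phi$, and finally show that the single recursion \eqref{E:d_phi}--\eqref{E:D_phi} computes the minimum of these per-$\Phi$ values. Since $\Phi^*$ and $\IH^*$ are defined as the minimiser, the asserted identity $LD^R\brac{\IH^*,\IG}=LD^{R,S}\brac{\IH,\IG}$ is then immediate from \eqref{E:LD-seg}, and the only real content is the equality $\D{\bracabs{\fh}}{\bracabs{\fg}}^{R,S}=\min_{\Phi\in\Psi}\D{\bracabs{\fh^\Phi}}{\bracabs{\fg}}^R$.

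First I would record the structural facts that drive the reduction. Because every $\Phi$ is a composition of splits (a $\visiblespace$ becomes a $\lbsymb$) and merges (a $\lbsymb$ becomes a $\visiblespace$), passing to $\fh^\Phi$ leaves the length unchanged, $\bracabs{\fh^\Phi}=\bracabs{\fh}$, and alters $\fh$ only on the interior entries of $\bfl_\fh$, each of which then carries either $\lbsymb$ (a cut) or $\visiblespace$ (kept as content); conversely every such labeling of the interior of $\bfl_\fh$ is realised by a suitable sequence of splits and merges. This identifies $\Psi$ with those labelings. For a fixed $\Phi$ the line breaks of $\fh^\Phi$ are exactly its $\lbsymb$-labeled positions, so Theorem~\ref{theo:LD-vanilla} applied to $\brac{\fh^\Phi,\fg}$ gives $\D{\bracabs{\fh^\Phi}}{\bracabs{\fg}}^R=LD^R\brac{\Phi\brac{\IH},\IG}$, and minimising over $\Phi$ reduces the theorem to the identity $\D{\bracabs{\fh}}{\bracabs{\fg}}^{R,S}=\min_{\Phi\in\Psi}\D{\bracabs{\fh^\Phi}}{\bracabs{\fg}}^R$.

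I would prove this identity by two inequalities exhibiting a path correspondence. For \textquotedblleft$\le$\textquotedblright, any alignment realising $\D{\bracabs{\fh^\Phi}}{\bracabs{\fg}}^R$ is admissible in \eqref{E:d_phi}--\eqref{E:D_phi} at no greater cost: a $\visiblespace$-labeled position matched as content against a $\visiblespace$ of $\fg$ uses the ordinary diagonal, a merged $\lbsymb$ matched against a $\visiblespace$ of $\fg$ uses the new zero-cost entry of \eqref{E:d_phi}, a cut aligned to a $\lbsymb$ of $\fg$ uses the cost-free diagonal of \eqref{E:D_phi} at a point of $\bfl_\fh\times\fl_\fg$, and a deleted region is removed word-by-word through the term $\bfl_\fh^{y}-\bfl_\fh^{y-1}-1$. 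For \textquotedblleft$\ge$\textquotedblright, an optimal path of \eqref{E:D_phi} crosses every breakable position exactly once, and the move used there resolves it as a cut or as content; reading these resolutions off the monotone path yields a labeling, hence a $\Phi\in\Psi$, for which the same path is an optimal alignment of $\fh^\Phi$ against $\fg$ of equal cost. The claim on Algorithm~\ref{alg:splitbestpathPhi} then follows as in Theorem~\ref{theo:LD-vanilla}: backtracking the stored predecessors reconstructs this optimal path, and the algorithm emits the induced segmentation, which is therefore a minimiser $\IH^*=\Phi^*\brac{\IH}$.

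I expect the path correspondence of the third step to be the main obstacle. The subtle point is the deletion term: for a fixed $\Phi$ whose deleted line still contains interior spaces, the word-by-word removal via $\bfl_\fh^{y}-\bfl_\fh^{y-1}-1$ is strictly cheaper, so \eqref{E:D_phi} need not reproduce that particular $\Phi$ — it silently replaces it by the finer \textquotedblleft split into single words\textquotedblright\ segmentation, which is exactly the one attaining the minimum. Verifying that each induced labeling always corresponds to a genuine element of $\Psi$ (reachability), that content spaces and merged breaks are only ever matched at columns $j\notin\fl_\fg$ while cuts are matched at columns $j\in\fl_\fg$, and that degenerate configurations such as adjacent breakable positions or would-be empty lines do not enlarge the feasible set beyond $\Psi$, is where the care is needed; the remark in Section~\ref{S:measure:seg} that an optimal $\Phi^*$ never leaves a space-containing line in $U$ is the manifestation of this phenomenon.
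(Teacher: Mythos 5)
Your proof is correct and follows essentially the same route as the paper's: both reduce the claim to a two-sided path correspondence between the recursion \eqref{E:d_phi}--\eqref{E:D_phi} and the recursion of Theorem~\ref{theo:LD-vanilla} applied to re-segmented hypotheses, with the deletion cost $\bfl_\fh^{y}-\bfl_\fh^{y-1}-1$ handled by observing that an optimal segmentation never deletes a line still containing spaces. If anything you are more explicit than the paper on the direction $LD^R\brac{\IH^*,\IG}\leq LD^{R,S}\brac{\IH,\IG}$, which the paper dismisses as ``clear by optimality'' but which really rests on your observation that every optimal path of the modified recursion induces a genuine $\Phi\in\Psi$.
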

\begin{proof}
Clearly, the inequality $LD^R\brac{\IH^*,\IG}\leq LD^{R,S}\brac{\IH,\IG}$ holds due to the optimality of $\IH^*$.

To show $LD^R\brac{\IH^*,\IG}\geq LD^{R,S}\brac{\IH,\IG}$, let $\fh^*=f\brac{\IH^*}$ and $\fh=f\brac{\IH}$ be the flattened sequences.
Let $P^*$ be the best path of $LD^R\brac{\IH^*,\IG}$.
We show that $P^*$ is also a path in $LD^{R,S}\brac{\IH,\IG}$ with the same cost.
Since $\fh^*$ and $\fh$ can only differ in $i\in\bfl_\fh=\bfl_{\fh^*}$, we only have to show that \eqref{E:D_vanilla} is equal to \eqref{E:D_phi} in points  $\brac{i,j}\in P^*$ with $i=\bfl_\fh^y$.

For $j=\fl_\fg^x$, the equations only differ in the path which deletes $\IH^*_{y'}$ with $i=\fl_{\fh^*}^{y'}=\bfl_{\fh^*}^{y}$.
Because the minimal $\bracabs{\IH_{y'}^*}$ is achieved if $\IH_{y'}^*$ contains no spaces, we know $\bracabs{\IH_{y'}^*}=\bfl_\fh^y-\bfl_\fh^{y-1}-1$, so for $j=\fl_\fg^x$ the equations are equal.

For $j\notin \fl_\fg$, \eqref{E:d_vanilla} and \eqref{E:d_phi} only differ in $\visiblespace=\fg_j$, but for both possible values $\fh_i\in\bracround{\visiblespace,\lbsymb}$ we get $\d{i}{j}^R=\d{i}{j}^{R,S}$, so they are equal.

From $LD^R\brac{\IH^*,\IG}\geq LD^{R,S}\brac{\IH,\IG}$ and $LD^R\brac{\IH^*,\IG}\leq LD^{R,S}\brac{\IH,\IG}$ it follows equality.
\end{proof}
\subsection{Accepting Reading Order Errors}
\label{S:Implementation:RO}
Since the number of possible permutations of the text lines is too large, to exactly calculate the minimal LD, a heuristic will be defined to find the best map between $\IH$ and $\IG$.
Therefore, \eqref{E:D_vanilla} is changed at positions $i=\fl_\fh^y$ and $j=\fl_\fg^x$ by allowing to `jump` between hypothesis lines:
\begin{align}
\label{E:D_pi}
\D{i}{j}=\min\left.\begin{cases}
\D{i-1}{j-1}\\
\min\limits_{ i'\in\fl_\fh\setminus\bracround{i}}\D{i'}{j}
\end{cases}\right\}\quad
\end{align}
This allows the algorithm to find the optimal $\IH_y$ for each $\IG_x$.
Due to these jumps Alg. \ref{alg:splitbestpath} can now return tuples in $W$ having the same value in the first component.
This leads to $\left\|A\right\|_\infty>1$ and $A\notin\AA$. The idea for the greedy Alg. \ref{alg:greedyLDcalculation} is to assign $\IH_x$ to $\IG_y$, which minimizes
\begin{align*}
\argmin_{\IH_y\in\IH} CER\brac{\IH_y,\IG_x}
\end{align*}
Thus, the algorithm ``locally'' finds the minimal CER for each $\IG_x$.
The HYP line with higher CER stays in the set of unmatched lines as well as GT lines, that where not mapped by Alg. \ref{alg:greedyLDcalculation}.
On these subsets the algorithm is applied recursively.
The number of recursive calls is bounded by $\bracabs{\IG}$, because \eqref{E:D_pi} does not allow to skip $\IG_x$ and at least the first component of $L$ in Alg. \ref{alg:greedyLDcalculation}, Line 7 leads to a reduction of $\IH$ and $\IG$. In practice, the recursion depth is between $1$ and $4$, whereas $\IG$ is reduced very fast over the depth.
\begin{algorithm}
	\small
	\DontPrintSemicolon
	\SetKwInOut{Input}{input}\SetKwInOut{Output}{output}
	\SetKwComment{Comment}{\%}{}
	\Input{HYP: $\IH$}
	\Input{GT: $\IG$}
	\Output{greedy minimal LD: $LD\brac{\IH,\IG}$}
	\If{$\IG=\emptyset$}{
		\Return{$\sum\limits_{\IH_y\in\IH}\bracabs{\IH_y}$}\;
	}
	\If{$\IH=\emptyset$}{
		\Return{$\sum\limits_{\IG_x\in\IG}\bracabs{\IG_x}$}\;
	}
	$P\gets$ runDynProg($\fh,\fg$)\;
	$U,V,W\gets$ SplitBestPath($P,\fl_\fh,\fl_\fg$)\Comment*[r]{see Alg. \ref{alg:splitbestpath}}
	$L\gets\braceckig{sort\brac{W;(y,x):: CER\brac{\IH_y,\IG_x}}}$\;
	\Comment{returns array with entries (y,x) sorted by CER}
	$D\gets 0$\;
	\For{$k\leftarrow 1$ \KwTo $\bracabs{L}$}{
		$(y,x)\gets L\braceckig{k}$\;
		\If{$\IH_y\in\IH$}{
			$\IH\gets\IH\setminus\bracround{\IH_y}$\;
			$\IG\gets\IG\setminus\bracround{\IG_x}$\;
			$D\gets D + LD\brac{\IH_y,\IG_x}$\;
		}
	}
	\Return{$D+greedy\_LD\brac{\IH,\IG}$}
	\caption{\label{alg:greedyLDcalculation}greedy\_LD}
\end{algorithm}
\section{Conclusion and Future Works}
\label{S:Conclusion}
We have introduced a measure to evaluate an end-to-end text recognition system. Dependent on its configuration it considers the reading order, segmentation errors and the geometric position. So it closes the gap between a raw character error rate (which so far was only properly defined on text line level) and bag-of-word (which is a retrieval measure on words, that mostly takes the geometric position into account).

Further research can be done to close the gap towards key word spotting (KWS) measures like \emph{ mean average precision} (mAP) or \emph{general average precision} (gAP).
\section*{Acknowledgment}
\ifthenelse{\blind}{BLIND ACKNOWLEDGMENT}{
This work was partially funded by the European Union's Horizon 2020 research and innovation programme under grant agreement No 674943 (READ -- Recognition and Enrichment of Archival Documents). }
\newpage

\bibliographystyle{IEEEtran}

\bibliography{lit}

\end{document}